\algrenewcommand\algorithmicindent{0.6em}
\title{Optimizing Graph Transformer Networks with Graph-based Techniques}
\author{
  Loc Hoang \thanks{This work was done by this author as part of work with AbbVie.} \thanks{Equal contribution}\\
  The University of Texas at Austin\\
  \texttt{loc@cs.utexas.edu} \\
  \And
  Udit Agarwal\footnotemark[2], Gurbinder Gill, Roshan Dathathri\\
  KatanaGraph\\
  \texttt{\{udit,gill,roshan\}@katanagraph.com}\\
  \And
  Abhik Seal, Brian Martin\\
  AbbVie Inc.\\
  \texttt{\{abhik.seal,brian.martin\}@abbvie.com}\\
  \And
  Keshav Pingali\\
  The University of Texas at Austin\\
  \texttt{pingali@cs.utexas.edu}\\
}
\newtheorem{theorem}{Theorem}
\begin{document}

\maketitle


\begin{abstract}
Graph transformer networks (GTN) are a variant of graph convolutional networks
(GCN) that are targeted to heterogeneous graphs in which nodes and edges have
associated type information that can be exploited to improve inference accuracy.
GTNs learn important metapaths in the graph, create weighted edges for these
metapaths, and use the resulting graph in a GCN. Currently, the only available
implementation of GTNs uses dense matrix multiplication to find metapaths.
Unfortunately, the space overhead of this approach can be large, so in practice
it is used only for small graphs. In addition, the
matrix-based implementation is not fine-grained enough to use random-walk based
methods to optimize metapath finding. In this paper, we present a
\emph{graph-based formulation} and implementation of the GTN metapath finding
problem. This graph-based formulation has two advantages over the matrix-based
approach.  First, it is more space efficient than the original GTN
implementation and more compute-efficient for metapath sizes of practical
interest. Second, it permits us to implement a \emph{sampling} method that
reduces the number of metapaths that must be enumerated, allowing the
implementation to be used for larger graphs and larger metapath sizes.
Experimental results show that our implementation is $6.5\times$ faster than the
original GTN implementation on average for a metapath length of 4, and our
sampling implementation is $155\times$ faster on average than this
implementation without compromising on the accuracy of the GTN. 
\end{abstract}


\section{Introduction}
\label{sec:intro}

Graph neural networks (GNNs) are used for machine learning on graphs to
perform tasks such as node classification and link prediction that
require learning features on nodes and/or edges. Many GNN architectures exist
such as the graph convolutional network (GCN)~\cite{GCN},
GraphSAGE~\cite{GraphSAGE}, and GIN~\cite{GIN}, among others~\cite{FastGCN,GAT,GraphSAINT}.  One
limitation of most GNN architectures is that they do not make use of
the additional information present in a heterogeneous graph,
such as the type information on vertices and/or edges.
Being able to operate on such heterogeneous graphs is important: for instance,
heterogeneous knowledge graphs have become the preferred choice for representing
data in the biomedical domain because they can assimilate data from many sources and
model the edge semantics needed to find important relations like drug-target
pairs, drug-side-effect pairs, drug-disease pairs, disease-pathway pairs, etc.
Another example is finding key opinion leaders who can champion a topic,
which is often done by using heterogeneous graphs compiled from literature,
publications, patents, scientific research areas, and citations.

The graph transformer network (GTN)~\cite{gtn} can leverage type information in
heterogeneous graphs by learning important \emph{metapaths} (typed paths) in the graph
and encoding this information into a \emph{metapath graph} which can then be
used by a regular GCN. This can improve node classification accuracy~\cite{gtn}.
In addition, they are able to \emph{find} important metapaths 
without the need for domain experts to provide the list of metapaths. Existing GTN implementations use dense matrix multiplication to compute metapaths. This is
not memory efficient as the size of the matrix grows quadratically with
the number of vertices in the graph. As a result, these implementations can be used only for
small graphs with fewer than 200K vertices. In addition, these matrix-based approaches
do not support fine-grained operations such as sampling paths through random
walks to reduce the number of considered paths.

This work makes the following contributions to the area of GTNs.

\begin{enumerate}
    \item We present a new algorithm for the graph transformer network
    that formulates the problem as a series of graph operations rather than as
    matrix operations.
    \item We present a random walk based approach that uses this graph-based formulation
     to sample important metapaths to further reduce memory usage and computation cost.
    \item We implement this algorithm and show that it outperforms the
    original implementation by $6.5\times$ on average.  We also show
    that random-walk sampling improves performance by $155\times$ over
    the original implementation without compromising accuracy of node classification.
    \item We show experimentally that the sampling approach can run and scale on large
    graphs with up to 1.5 billion edges.
\end{enumerate} 


\section{Background: Graph Transformer Networks}
\label{sec:background}

This section introduces graph neural networks and graph transformer networks,
and motivates the need for a graph formulation of GTNs.

%

\subsection{Graph Neural Networks}

Graph neural networks (GNNs)~\cite{originalgnn} extend the deep neural network
(DNNs) approach to graphs. In DNNs, an input (typically a tensor) is passed through a series of
\emph{layers} where it is transformed via tensor operations like matrix
multiplication using learnable parameters. The output at the end of the
layers is a \emph{transformed} version of the input used for some downstream
task.
DNNs are trained in a series of epochs that update the learnable parameters of
the layers to achieve better results such as improved accuracy for tasks like node
classification. In a GNN, the input to the layers are vertices and their features. To leverage
the fact that the input is a graph, the feature passed into a layer is not the
vertex feature by itself but an \emph{aggregation} of the features of the
vertex's neighbors. Many GNN architectures exist~\cite{GraphSAGE,GCN,GAT,GIN},
and they differ in how they define the \emph{aggregation} and the \emph{update} phases
(e.g., layer transformation including and pre-/post-processing). In this paper, we focus
on the graph convolutional network (GCN)~\cite{GCN}, a basic GNN which does a sum aggregation
of neighbor features normalized by source/destination vertex degrees, followed by a
vanilla update.


\subsection{Metapaths}

A length $l$ metapath in a heterogeneous graph is a path comprised of edge types $(t_1, t_2, ..., t_l)$ that represents a typed relation between the endpoints.  A \emph{metapath
edge} can be drawn between the endpoints of the metapath.  This explicit
representation of a metapath is useful because it (1) captures heterogeneity
with a non-typed edge and (2) explicitly encodes a multi-hop relationship beyond
a vertex's immediate neighborhood. Since most GNN architectures
do not leverage the heterogeneity of a graph during training, using metapath
edges to represent these heterogeneous relationships can increase the
information that the GNN can use during training and inference~\cite{magnn,gtn}.
Metapaths are also useful in other machine learning applications such as
improved node embeddings with metapath-guided random walks~\cite{hetespaceywalk,metapath2vec}
and metapath connections serving as predictors of effectiveness in a learning
model of drug effectiveness for diseases~\cite{drugmeta}.

Not all metapaths are meaningful, so typically an expert familiar with
the heterogeneous graph defines the important metapath relationships.
This can be problematic: it is not the case that an expert can be found for
every heterogeneous graph dataset, and even experts can introduce biases into
the metapaths. For datasets in a new domain, an expert may not even exist.
Therefore, it is useful to have a method to determine important metapaths
automatically and leverage them for learning. 

\subsection{Graph Transformer Networks}
\label{subsec:gtn}

GTN is a variant of graph convolutional networks (GCN)~\cite{GCN} that 
automatically learns important metapaths in a heterogeneous graph. Importance is defined
using a scoring function for paths. Usually, the score for a path
is computed as the product of the importance scores of its component edges, and
the importance score of an edge may depend on the position of that edge
in the metapath. Therefore, a length $l$ metapath $m$ comprised
of types $(t_1, t_2, ..., t_l)$ has an \emph{importance score} equal to
$s(1, t_1) * s(2, t_2) * \cdots * s(l, t_l)$, where $s$ is a scoring function
for an edge in a given position in the metapath~\cite{gtn}. 

\begin{figure*}[ht]
	\begin{center}
		\includegraphics[width=0.84\textwidth]{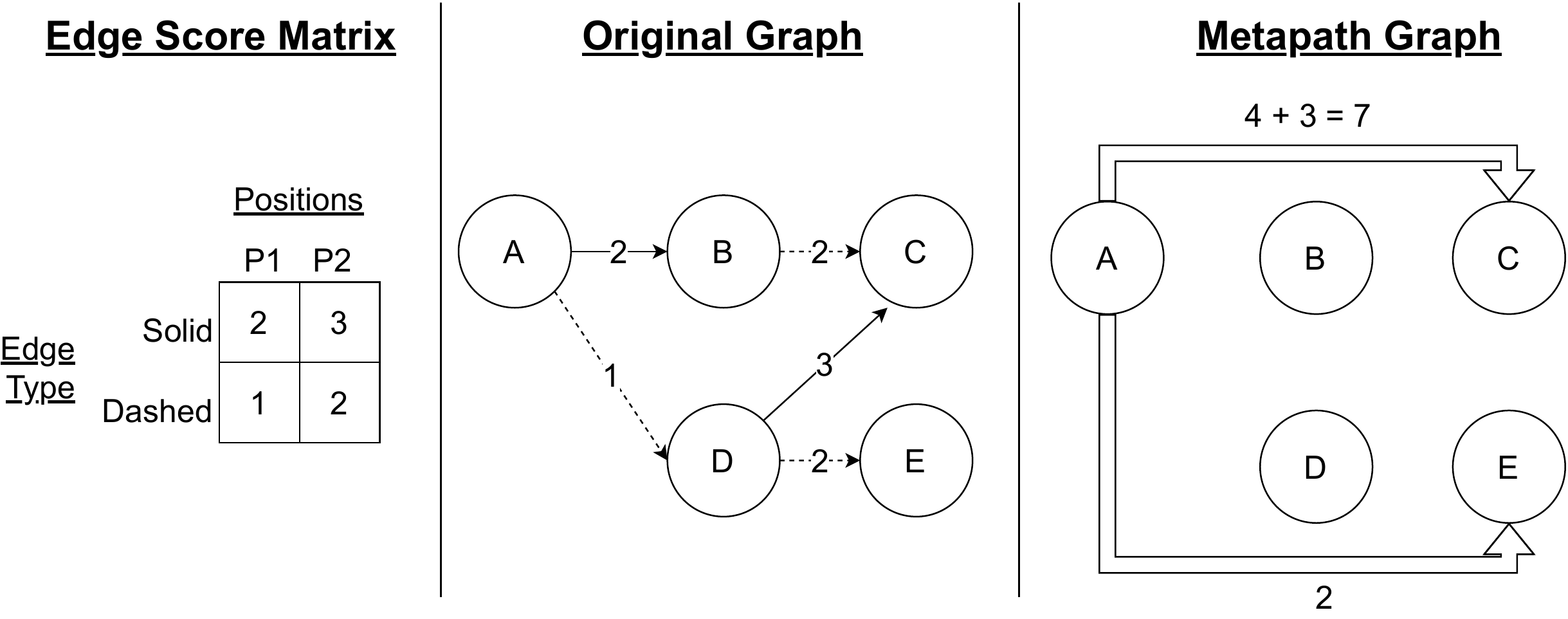}
		\caption{Metapath graph construction. The matrix denotes the
        importance of an edge type for a position in the metapath. The heterogeneous graph's
        edges are scored based on this matrix. The metapath graph has edges $(A, C)$ and $(A,E)$:
        the former is composed of $(A,B,C)$ and $(A,D,C)$ with scores $2 * 2  = 4$ and $1 * 3 = 3$, and the latter is composed of
        $(A, D, E)$ with score $1 * 2 = 2$.}
		\label{fig:metapath_graph}
	\end{center}
\end{figure*}
\begin{figure*}[ht]
	\begin{center}
		\includegraphics[width=0.64\textwidth]{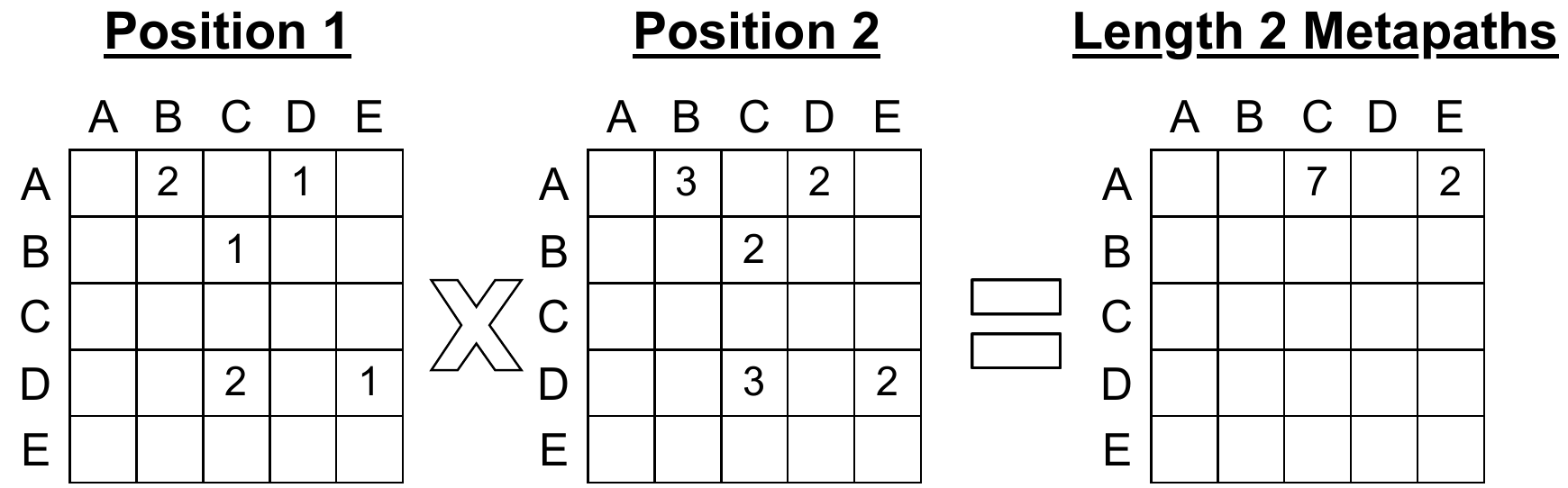}
		\caption{Example of matrix computation that occurs to find the
		metapaths in \cref{fig:metapath_graph}. The graph's adjacency matrix
		is duplicated, and the scores corresponding to each edge type for a
		particular on is filled accordingly. A matrix multiply then finds
		metapaths edges with the correct score: $(A, C) = 7$ and $(A, E) = 2$.}
		\label{fig:matrix_example}
	\end{center}
\end{figure*}

Given the input graph, the GTN generates a new \emph{metapath graph} by replacing the original graph's edges with weighted metapath edges. \cref{fig:metapath_graph} illustrates this process for a length 2 metapath.\footnote{A GTN implementation includes other details such as
addition of self-edges to the original graph to simulate paths of length less than
$l$ and softmax normalization of scores, as used in the original GTN
implementation~\cite{gtn}.  For simplicity of explanation and
brevity, we do not delve into these details, but the
implementations we evaluate in our experiments do account for them.}
The score matrix on the left is a graphical representation of an
edge scoring function $s$. The example has two edge types: solid and dashed.
The original graph is shown in the middle, and it has 3 paths: $(A, B, C)$,
$(A, D, C)$, and $(A, D, E)$. These correspond to 3 different metapaths: (solid, dashed),
(dashed, solid), and (dashed, dashed), respectively, and they get scores of
4, 3, and 2, respectively. The metapath edges are weighted based
on scores of the paths and their endpoints: $(A,B,C)$ and $(A,D,C)$ contribute
their scores to metapath edge $(A,C)$, and $(A,D,E)$ contributes to edge $(A, E)$.
The generated metapath graph will then be used as the input to the GCN in place
of the original graph. The metapath graph's edges are used during GNN aggregations, which
are \emph{weighted} based on the metapath edges' weight; therefore, if the
metapath weights accurately reflect the importance of the metapath, then
the GCN is able to correctly leverage the heterogeneity of the original
graph and its important relationships through the weights on metapath edges.

The scoring function must be able to score the edges accurately;
since the metapath graph construction is part of the inference pipeline, it is
possible to use the error of the GCN step for the task it is being trained for
to adjust the scoring function. This is the key idea behind GTN training: the error
from the GCN is back-propagated to the metapath graph generation step, and the
gradients can then be used to adjust the scores so that the next training epoch
will result in higher accuracy. There are two main benefits: (1) the GCN will be
augmented by heterogeneous metapath information that it did not have access to
before, which may improve GCN accuracy for heterogeneous graphs, and (2) the
scoring function is trained alongside the GCN, meaning that the scores can
identify the important edges of a metapath; in other words, \emph{important
metapaths are learned without any expert intervention}.

\subsection{Implementations of GTNs}

The only implementation of the GTN we know of is
the original one written in PyTorch~\cite{PyTorch,gtn}. The metapath
graph in this implementation is represented as a dense adjacency matrix
computed with a series of dense matrix multiplication operations. \cref{fig:matrix_example}
illustrates this for the example in \cref{fig:metapath_graph}: an edge score matrix
is created for every position in the metapath with the appropriate scores for each edge,
and the matrix multiplication operation is used to enumerate paths. For an $l$
length metapath, there are $l$ matrices and $l-1$ matrix multiplies to generate the
metapath graph.

In this approach to metapath finding, the computation cost
grows linearly with the length of the metapath, and the key computation is a
dense matrix multiplication, which has been heavily optimized by BLAS libraries.
However, the memory overhead of this approach is extremely high: dense matrices
use $O(n^2)$ memory where $n$ is the number of graph vertices.
Because of this, this implementation fails to run on
even average-sized graphs unless the machine has a very large memory. 

\section{Graph Formulation for Graph Transformer Networks}
\label{sec:gtn-graph}

This section presents our formulation of the graph transformer network as a
series of graph operations.

\subsection{Metapath Graph Generation via Path-finding}

Instead of finding metapaths via matrix multiplies, we find paths
via graph traversal and generate the metapath graph using the generated paths.
We use dynamic programming to make this process efficient.

\cref{alg:vanilla_gtn} shows the high level algorithm.  For a metapath graph
with metapaths of length $l$, the weight of a metapath edge $(a, b)$ is the
sum of the scores of all $l$-length metapaths starting at $a$ and ending at $b$.  Therefore, the
algorithm finds all length $l$ paths in the graph
(Line~\ref{algline:find_path}); the method of path finding on the graph is
left to the implementation, and different path finding methods trade-off
memory and compute overheads.
Each path is scored by obtaining
the individual scores of its edges and multiplying them together as discussed
in \cref{subsec:gtn}
(Line~\ref{algline:v_paths}).  This score is
added to a running sum for the metapath edge corresponding to that path
(Line~\ref{algline:v_metapath_add}).  After all paths are enumerated, scored,
and added, the metapath graph MG will be complete and have the correct edge
weights.

If $l$ is sufficiently large, path finding becomes very expensive as the
number of paths grows exponentially.  Therefore, we formulate
\cref{alg:opt_gtn}, based on a dynamic programming approach for constructing
large paths by concatenating several smaller paths~\cite{agarwal2018,cormen2009}
to reduce the computational requirement of
\cref{alg:vanilla_gtn} by cutting the length of the generated paths by half in
exchange for using more memory to store intermediate metapath graphs, called MG$_1$
and MG$_2$ in the pseudocode in \cref{alg:opt_gtn}.  Instead of enumerating length $l$ paths, paths of length $l/2$
are enumerated (Line~\ref{algline:o_enum}).
Then, two scores are generated
for each individual path:
the first one corresponds to the path starting from edge $(v,x_1)$ at position 1
(Line~\ref{algline:o_s1}) and the second one corresponds
to the one with all edge positions offset by $l/2$ (Line~\ref{algline:o_s2}).
These two scores are added into the metapath edges of two different intermediate metapath
graphs (Lines~\ref{algline:o_metaadd_start}-\ref{algline:o_metaadd_end}).
The final step (Lines~\ref{algline:o_final_start}-\ref{algline:o_final_end})
gets the metapath edge $(a, c)$ of the desired length $l$ metapath graph by
multiplying the scores of metapath edge $(a,b)$ and $(b, c)$ in the two
intermediate metapath graphs, respectively, for all $b$.  \cref{alg:opt_gtn}
computes the same metapath graph as \cref{alg:vanilla_gtn}: at a high level,
it is because the multiplication of the two metapath edge scores computes the
sum of the products of the same paths found by \cref{alg:vanilla_gtn} (a
formal proof can be found in the Appendix).

\begin{algorithm}[H]
\small
\caption{\small Vanilla Metapath Graph Generation}\label{alg:vanilla_gtn}
\textbf{Input:} Graph $G$; Edge Score Function $ES$; Edge Type Function $ET$\\
\textbf{Output:} Metapath Graph $MG = (V, E, W)$
\begin{algorithmic}[1]
  \For{all vertices $v$ in G}
    \State Enumerate length $l$ paths $P$ from $v$ \label{algline:find_path}
    \For{path $p=(v, x_1, ..., x_{l})$ in $P$}
      \State $score$ $=$ $ES(ET(v, x_1), 1) \cdot \prod\limits_{i=2}^{l} ES(ET(x_{i-1},x_{i}), i)$ \label{algline:v_paths}
      \State add edge $(v, x_l)$ to $MG.E$, if it already doesn't exist
	\State  $MG.W(v, x_{l}) += score$     \label{algline:v_metapath_add}
    \EndFor
  \EndFor
\end{algorithmic}
\end{algorithm}

\begin{algorithm}[H]
\small
\caption{\small Metapath Graph Generation using Random Walks}\label{alg:gtn_random}
\textbf{Input:} Graph $G$; Edge Score Function $ES$; Edge Type Function $ET$; Number of walks $num\_walks$\\
\textbf{Output:} Metapath Graph $MG = (V, E, W)$
\begin{algorithmic}[1]
  \For{all vertices $v$ in G}
    \State Sample length $l$ $num\_walks$ paths $P$ from $v$ \label{algrandom:find_path}
    \For{path $p=(v, x_1, ..., x_{l})$ in $P$}
      \State $score$ $=$ $ES(ET(v, x_1), 1) \cdot \prod\limits_{i=2}^{l} ES(ET(x_{i-1},x_{i}), i)$ \label{algline:rw_paths}
      \State add edge $(v, x_l)$ to $MG.E$, if it already doesn't exist
	\State  $MG.W(v, x_{l}) += score$     \label{algrandom:v_metapath_add}
    \EndFor
  \EndFor
\end{algorithmic}
\end{algorithm}
\begin{algorithm}[H]
\small
\caption{\small Metapath Graph Generation with $l/2$ Paths}\label{alg:opt_gtn}
\textbf{Input:} Graph $G$; Edge Score Function $ES$; Edge Type Function $ET$; \\
\textbf{Output:} Metapath Graph for first-half $l/2$ paths $MG_{1}$; Metapath Graph for second-half $l/2$ paths $MG_{2}$; Metapath Graph for full $l$ paths $MG$
\begin{algorithmic}[1]
  \For{all vertices $v$ in G}
    \State Enumerate length $l/2$ paths $P$ from $v$ \label{algline:o_enum}
    \For{path $p=(v, x_1, ..., x_{l/2})$ in $P$}
      \State $score_1$ $=$ $ES(ET(v, x_1), 1) \cdot \prod\limits_{i=2}^{l/2} ES(ET(x_{i-1},x_{i}), i)$ \label{algline:o_s1}
      \State $score_{2}{=}ES(ET(v, x_1), l/2 {+} 1) \cdot$ $\prod\limits_{i=2}^{l/2} ES(ET(x_{i-1},x_{i}), l/2+i)$ \label{algline:o_s2}
      \State add edge $(v,x_{l/2})$ to $MG_1.E$ and $MG_2.E$, if it already doesn't exist
      \State $MG_{1}.W(v, x_{l/2}) += score_1$ \label{algline:o_metaadd_start}
      \State $MG_{2}.W(v, x_{l/2}) += score_2$ \label{algline:o_metaadd_end}
    \EndFor
  \EndFor
  \For{metapath edge $e_1 = (a, b)$ in $MG_1$}\label{algline:o_final_start}
    \For{metapath edge $e_2 = (b, c)$ in $MG_2$}
    		\State add edge $(a,c)$ to $MG$, if it already doesn't exist
      \State $MG.W(a, c) += MG_1.W(e_1) * MG_2.W(e_2)$
    \EndFor
  \EndFor\label{algline:o_final_end}
\end{algorithmic}
\end{algorithm}



\subsection{Implementation Considerations for Pathfinding Formulation}
\label{subsec:meta-impl}

Obtaining an efficient implementation requires careful attention to
the following issues, some of which involve space-time tradeoffs.

\paragraph{Going Beyond $\mathbf{l/2}$}
Splitting the path into two $l/2$ subpaths avoids redundantly
finding $l/2$ subpaths while finding the length $l$ path.  It is possible to
use even smaller subpaths (e.g., $l/4$, $l/8$, $l/16$).
We use $l/2$ for two main reasons: (1)
breaking a path down further would require another intermediate metapath
graph object (e.g., MG$_{\{1,2,3,4\}}$) which could increase the space
overhead significantly and (2) in our applications, metapath lengths are not very large.

\paragraph{Enumeration of Paths}
The performance of metapath-finding depends critically on the method
used for path enumeration, and involves a trade-off between time and
space.  For example, paths from a given node can be enumerated by
performing a depth-first walk starting at that node. This is memory-efficient
since there are only a limited number of ``active path'' searches at any given time.
The problem with this approach is that there
may be redundancy when computing path scores.
To illustrate, say there are two paths $\{A,B,C,D,E\}$ and $\{A,F,C,D,E\}$
found by two different threads.  Both threads would find the
subpath $\{C,D,E\}$ to append to the subpaths
$\{A,B,C\}$ and $\{A,F,C\}$. It is more efficient to find this subpath
once and multiply its score with the sum of the two prefix paths.
An alternative is to perform level-by-level path enumeration. First, find all length
$k$ metapaths and their scores and store them all in memory.
Then, to get length $k {+} 1$ metapaths, take the length $k$ metapaths, extend them by one edge,
and multiply with the score for that edge.  This avoids the
redundancy problem of the depth-first extension: $\{A,B,C\}$ and
$\{A,F,C\}$ will be found at the same time with their contributions present
in the length 2 metapath edge $(A,C)$.  This approach, however, must store
all such intermediate metapaths during computation, and this space overhead
quickly becomes infeasible as the path length and the graph grows in size.


\paragraph{Memoization of Paths vs. Recomputation of Paths}

The gradient update step of GTN training requires that intermediate metapaths of
length less than $l$ are available to compute the gradients to update edge scores.
PyTorch GTN's~\cite{gtn} auto-differentiation mechanism stores the intermediate matrices for
metapaths of length 2 to length $l$ metapaths, and this becomes prohibitively expensive
for large graphs and metapath lengths.  In the
graph formulation, one can store all length $l$ paths and derive gradients from
these paths, but this is also expensive.  To avoid memory overhead, our implementation regenerates all paths in the backward gradient update pass.  This adds computational overhead
but reduces memory overheads, so it permits us to handle much larger graphs.


\paragraph{Storage Format of (Intermediate) Metapath Graphs}

Graphs are typically stored in memory in a compressed format where non-existing
edges do not take up storage.  This is in contrast to a dense adjacency matrix
in which non-existent edges are stored explicitly as zeroes.  The first approach
has the disadvantage that the number of edges each vertex has must be known
ahead of time, so dynamic allocation of this data structure cannot be done:
precomputation must be done to determine how large an intermediate metapath
graph will be if this storage format is used.  The second approach does not have this
disadvantage, but the space requirement is proportional to the square of the number of
vertices.  The PyTorch implementation~\cite{gtn} uses dense matrices for each
intermediate metapath graph, so the memory overhead is impractical for large graphs.
Our graph formulation precomputes the space required for the metapath graph and the intermediate metapath graphs. 

\subsection{Metapath Sampling}

As graphs grow in size, the number of metapaths grows 
exponentially, and as the number of paths grow, the amount of total
computation and the overall memory usage also increase.
To overcome these issues and generate the metapath graph more
efficiently, we use a random walk based approach for sampling important paths
based on the edge scores of the paths.  This approach allows us to significantly
reduce the total amount of computation and memory usage while also allowing us
to build a metapath graph that gives comparable accuracy to original GTN
formulation. Because we only find a constant number of paths per node, we can
explicitly store them as well so recomputation of paths is not needed 
during the backward step. Algorithm~\ref{alg:gtn_random} presents the pseudocode
of our random walk approach for generating metapath graphs. The appendix
contains the details of the random walk implementation. Note that sampling
metapaths cannot be easily expressed using a matrix-based API such as PyTorch~\cite{gtn}:
sampling is a fine-grained operation while matrix operations are inherently bulk
operations. 

\section{Experimental Results}
\label{sec:results}

\subsection{Setup}
We implement our graph transformer network (GTN) implementation as well as the
graph convolutional network (GCN)~\cite{GCN} subroutine in C++ code using the
KatanaGraph graph engine: we refer to it in this paper as Graph-GTN (G-GTN). We
also evaluate the random walk variant of G-GTN which we refer to as Walk-GTN
(W-GTN). W-GTN-X refers to W-GTN with X sampled metapaths for each vertex. Our
GCN implementation is based on the vertex program formulation in
DeepGalois~\cite{deepgalois} with an extension to propagate the gradients to the
GTN phase during training; this formulation has been shown to be competitive
with the state-of-the-art. We compare our GTN with the PyTorch implementation
which we refer to as P-GTN.  \cref{table:inputs} lists the inputs used in our
experiments.  We use ACM, DBLP, and IMDB~\cite{HGAN}, three inputs from the
original GTN paper~\cite{gtn}, to evaluate the runtime and accuracy. In order to
evaluate scalability for large graphs, we add synthetic labels/features/edge
types (as required) to chem2bio~\cite{chem2bio2rdf, chembioedge2vec},
reddit~\cite{GraphSAGE,pushshift}, ogbn-products~\cite{OGB}, and
twitter40~\cite{twitter40}; we do not evaluate accuracy on these graphs because
of this synthetic  metadata.
We were unable to evaluate the P-GTN with these inputs because it
failed to run them due to the memory requirements of that implementation.

\begin{figure*}[ht]
	\begin{center}
		\includegraphics[width=0.98\textwidth]{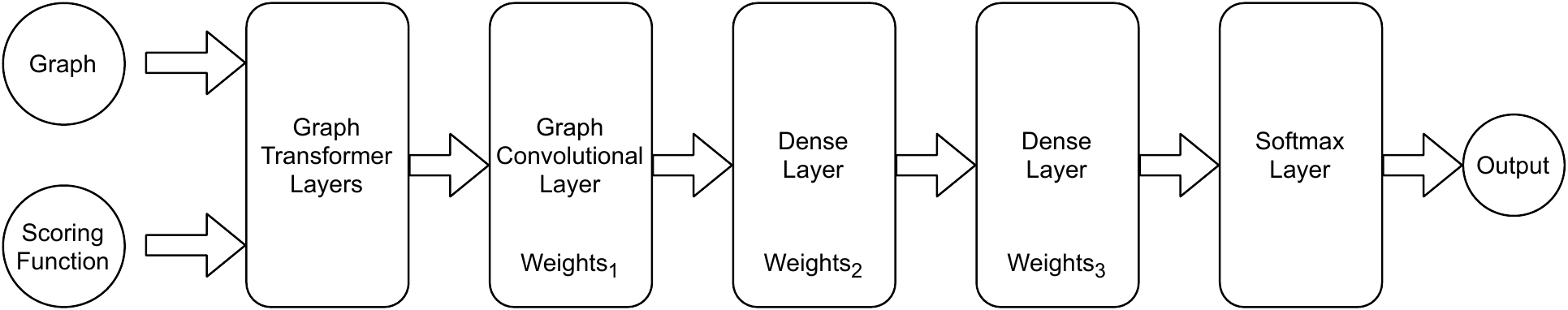}
		\caption{Illustration of the Graph Transformer Network
		architecture. The Graph Transformer Layers (one layer for each metapath
		edge) output a metapath graph for use by the GCN and dense layers.}
		\label{fig:gtn-arch}
	\end{center}
\end{figure*}

Experiments are run on a machine with 40 Intel Xeon Gold 5218R (Cascade Lake)
CPUs with 256GB DRAM. All networks are run for 300 epochs: the GTNs use a GCN
layer followed by 2 dense layers (illustrated in \cref{fig:gtn-arch}) with a
hidden feature size of 64 except for twitter40, which uses 16 due to twitter40's
higher memory cost, and the GCN uses 2 GCN layers with hidden feature size 64.
The output layer is a softmax layer in all cases. If the 300 epochs take longer than 8
hours (i.e., average epoch time of 96 seconds), we list the time as
``TO''/timeout in the results. Precomputation time (e.g., graph construction,
intermediate CSR computation, etc.) for all systems is not included in the
runtime results. The accuracy we report is
for the node classification problem: test accuracy is evaluated every 5 epochs including the
last epoch. Unless otherwise mentioned, GTNs use 3 graph transformer layers
(i.e., the metapaths being considered have up to 4 edges).  More experimental
setup details can be found in the Appendix.

\begin{table}[t]
\footnotesize
\centering
\begin{tabular}{c|ccccccc}
\toprule
& \multicolumn{7}{c}{\textbf{Dataset}} \\ \hline

                                        & {\textbf{ACM}}          & {\textbf{IMDB}}             & {\textbf{DBLP}}            & {\textbf{chem2bio}}  & {\textbf{reddit}}       & {\textbf{ogbn-products}}    & {\textbf{twitter40}}            \\ \hline
\textbf{Vertices}                       &       8994              &          12772              &           18405            & 296K                 & 232K                    & 2.4M                        & 61.6M                           \\
\textbf{Edges}                          &         25922           &          37288              &              67946         & 728K                 & 114.6M                  & 123.7M                      & 1,468M                          \\
\textbf{Features}                       &           1902          &            1256             &                  334       & -                    & -                       & -                           & -                               \\
\textbf{Classes}                        &            3            &         3                   &            3               & -                    & -                       & -                           & -                               \\
\textbf{Edge Types}                     & 4                       & 4                           &    4                       & -                    & -                       & -                           & -                               \\
\textbf{Train/Val/Test\%}    &      7/3/24             & 2/2/18                      &     4/2/16                 & -                    & -                       & -                           & -                               \\
\bottomrule
\end{tabular}
\caption{Input graphs and their properties. chem2bio, reddit, ogbn-products, and
twitter40 have synthetic data for values it does not explicitly have (e.g.,
twitter40 has no features); these values are omitted from the table since
accuracy is not evaluated. The sum of train, val, and test splits does not amount
to 100 since not all the nodes' labels are used.}
\label{table:inputs}
\end{table}

\subsection{Comparisons with matrix-based GTN}

\begin{table}[t]
\centering
\footnotesize
\begin{tabular}{l|rr|rr|rr}
\toprule
                   & \multicolumn{2}{c|}{\textbf{ACM}}                                                              & \multicolumn{2}{c|}{\textbf{IMDB}}                                                             & \multicolumn{2}{c}{\textbf{DBLP}}                                                                      \\ \cline{2-7} 
                   & \multicolumn{1}{c}{\textbf{Time (s)}} & \multicolumn{1}{c|}{\textbf{Accuracy}}                 & \multicolumn{1}{c}{\textbf{Time (s)}} & \multicolumn{1}{c|}{\textbf{Accuracy}}                 & \multicolumn{1}{c}{\textbf{Time (s)}} & \multicolumn{1}{c}{\textbf{Accuracy}} \\ \hline
\textbf{GCN}       & 0.05                                             & 0.92                                        & 0.04                                             & 0.57                                        & 0.03                                             & 0.89                                       \\
\hline
\textbf{P-GTN}     & 12.20                                            & 0.90                                        & 32.25                                            & 0.57                                        & 73.03                                            & 0.94                                       \\
\textbf{G-GTN}     & 6.88                                             & 0.90                                        & 0.26                                             & 0.57                                        & 56.68                                            & 0.95                                       \\
\textbf{W-GTN-10}  & 0.06                                             & 0.90                                        & 0.10                                             & 0.54                                        & 0.11                                             & 0.90                                       \\
\textbf{W-GTN-50}  & 0.14                                             & 0.92                                        & 0.17                                             & 0.59                                        & 0.32                                             & 0.94                                       \\
\textbf{W-GTN-100} & 0.20                                             & 0.92                                        & 0.24                                             & 0.60                                        & 0.57                                             & 0.94                                       \\
\bottomrule
\end{tabular}
\caption{Average epoch time and peak accuracy across 300 epochs with GTNs with 3 graph transformer layers (i.e., metapath with up to 4 edges).}
\label{tbl:metapath3}
\end{table}

\paragraph{Overview} \cref{tbl:metapath3} shows the average epoch time and peak
accuracy for P-GTN compared to G-GTN, W-GTN, and the basic GCN when metapaths
with 4 edges are used. G-GTN and W-GTN, the graph formulations of the GTN
problem, are significantly faster than P-GTN: on average, G-GTN is
$6.5\times$ faster, W-GTN-10 is $350\times$ faster, W-GTN-50 is
$155\times$ faster, and W-GTN-100 is $101\times$ faster than P-GTN for
this setting.  W-GTN outperforms the other GTN systems: it looks for a constant
number of metapaths per node, so for non-trivial graphs with many paths, the
reduction in the number of paths results in better runtime.

Differences in accuracy of P-GTN and G-GTN can be attributed to randomness such
as weight initialization or the differences in exact computation among the
systems (e.g., floating point inaccuracies based on different multiplication
order). The important point to note is that accuracy is similar between the two;
this should be the case because abstractly, they do the exact same thing.  On
the other hand, in principle, sampling in W-GTN may affect accuracy since the
metapath graph generated will consist of only a constant number of random
sampled walks from each vertex.  What is noteworthy is that W-GTN does not
significantly degrade accuracy even though the metapath graph is constructed
from sampled paths; in these results, degradation occurs only when 10 paths per
node are sampled, but there is no loss of accuracy when more paths are sampled.
In fact, W-GTN can achieve \emph{higher} peak accuracy in these settings. This
can occur because \emph{important} metapaths are found by sampling, so the
resulting graph is less noisy than both the original graph and the full metapath
graph.  This could also be due to the fact that sampling can avoid overfitting
to the training dataset.

\begin{figure*}[ht]
	\begin{center}
		\includegraphics[width=0.40\textwidth]{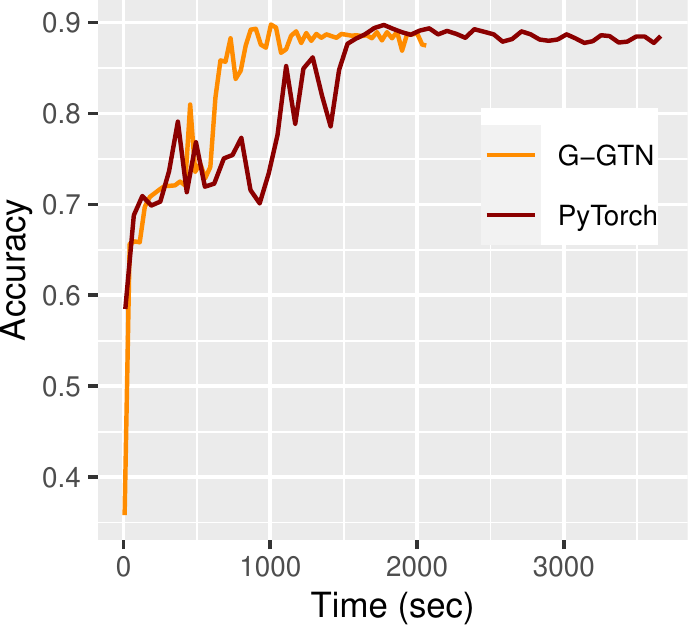}
		\includegraphics[width=0.40\textwidth]{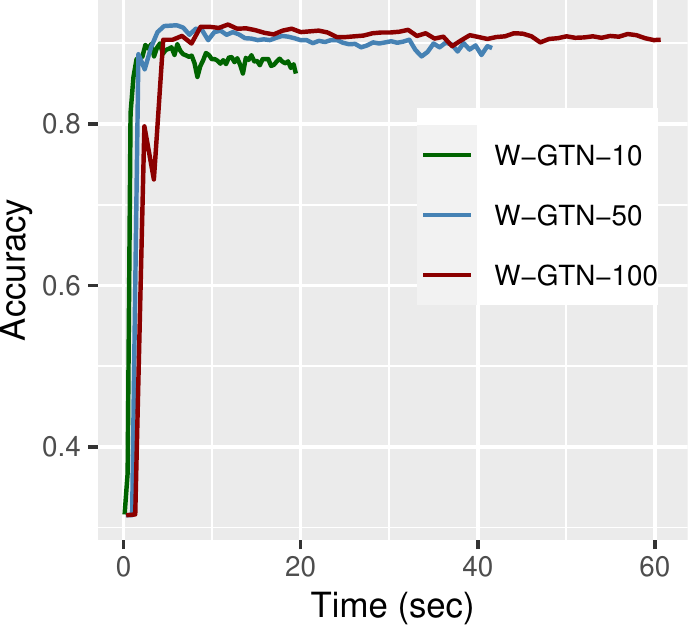}
		\caption{Time to accuracy for GTN based systems for runs with 3 metapath
		layers.}
		\label{fig:time_to_acc_acm}
	\end{center}
\end{figure*}

\begin{figure*}[ht]
	\begin{center}
		\begin{subfigure}[b]{0.32\textwidth}
		\includegraphics[width=\textwidth]{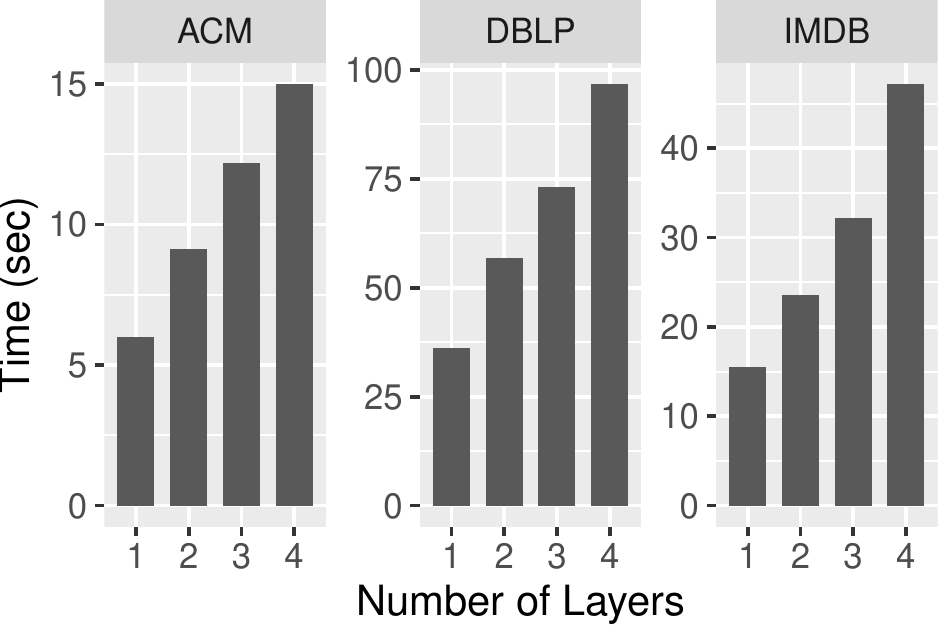}
		\caption{P-GTN}
		\end{subfigure}
		\begin{subfigure}[b]{0.32\textwidth}
		\includegraphics[width=\textwidth]{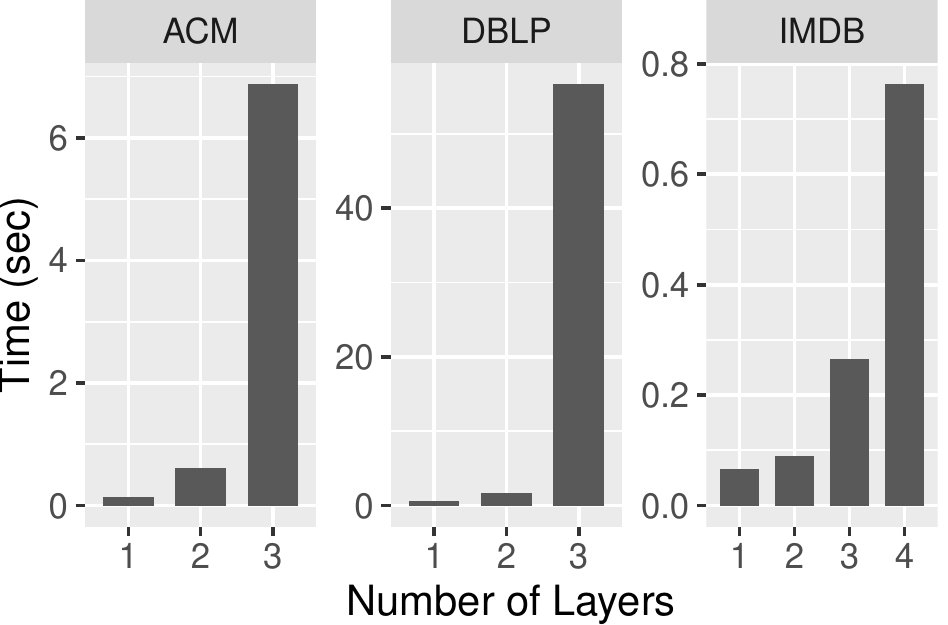}
		\caption{G-GTN}
		\end{subfigure}
		\begin{subfigure}[b]{0.32\textwidth}
		\includegraphics[width=\textwidth]{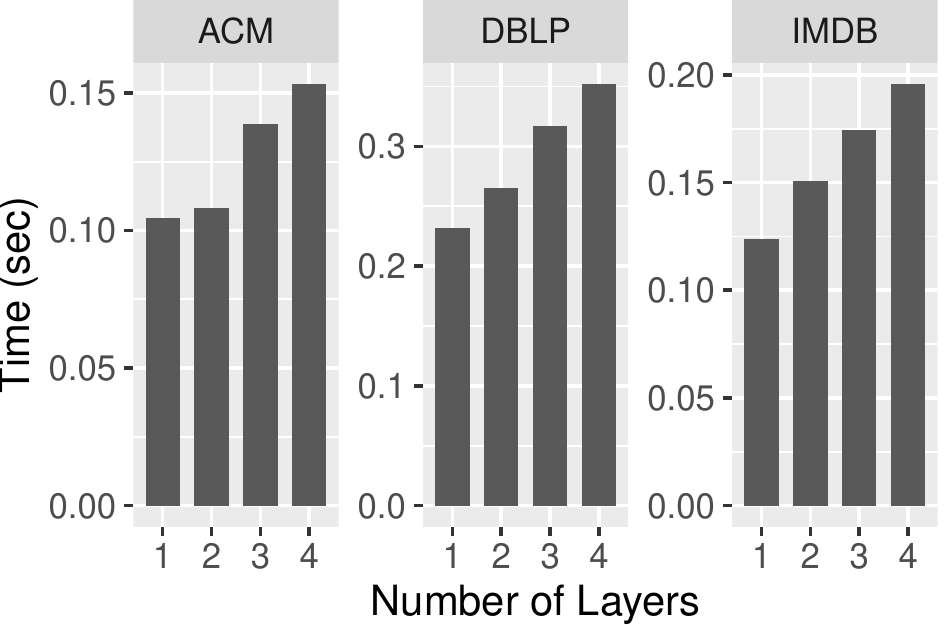}
		\caption{W-GTN-50}
		\end{subfigure}
		\caption{Average epoch time scaling of GTN systems for varying metapath
		lengths. Note that scales are different in each plot.}
		\label{fig:small_scaling}
	\end{center}
\end{figure*}

\cref{fig:time_to_acc_acm} shows the time to accuracy for the ACM graph for the
GTN-based systems. G-GTN reaches high accuracy significantly faster than P-GTN
because each epoch takes less time.  Similarly, the variations of W-GTN reach
good accuracy faster than G-GTN because of faster epoch time.

Finally, note that the GCN is the most computationally efficient of the
GNN architectures we compare with because it does not find metapaths,
and for some of these heterogeneous graphs like ACM, it obtains
comparable peak accuracy to the GTNs without any need to use the heterogeneity
of the graph. In these cases, a GTN is not required. For graphs like IMDB and DBLP where
using heterogeneous information leads to significantly better accuracy, however,
a computationally efficient GTN like G-GTN or W-GTN is vital.

\paragraph{Scaling with Metapath Length}

\cref{fig:small_scaling} shows the runtime of the GTN
systems with varying numbers of metapath lengths. P-GTN's runtime scales
linearly as the metapath length increases: increasing $l$ only adds another
dense matrix for matrix multiplication. The trade-off is that memory usage
increases significantly (by the size of the dense matrix). G-GTN is faster than
P-GTN at lower metapath lengths, but its performance declines as
metapath length increases. As explained in \cref{subsec:meta-impl}, in exchange
for memory efficiency, G-GTN must do redundant computation during path search
and does not memoize paths, resulting in an increase in compute time as the
metapath length (i.e., the number of paths) grows.  W-GTN avoids these issues
entirely by limiting the number of metapaths to a constant number and storing
these paths.  W-GTN is significantly faster than the other systems and continues
to scale as the metapath length grows because the number of generated metapaths is
constant without loss of accuracy.

\subsection{GTNs on Large Graphs}

\begin{figure*}[ht]
	\begin{center}
		\includegraphics[width=0.24\textwidth]{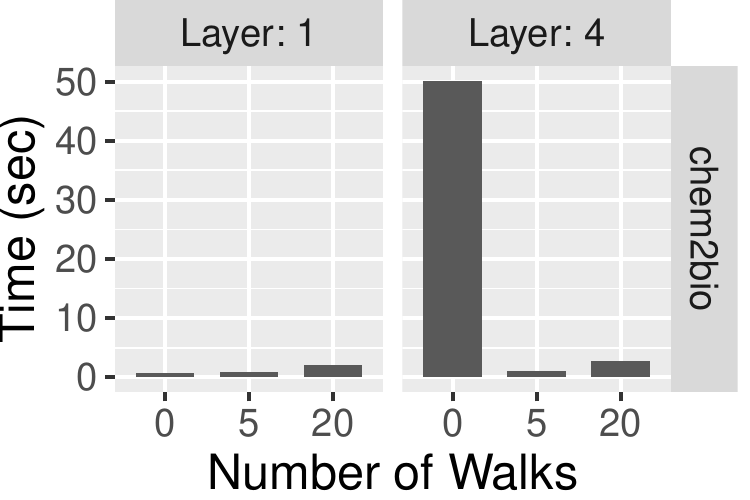}
		\includegraphics[width=0.24\textwidth]{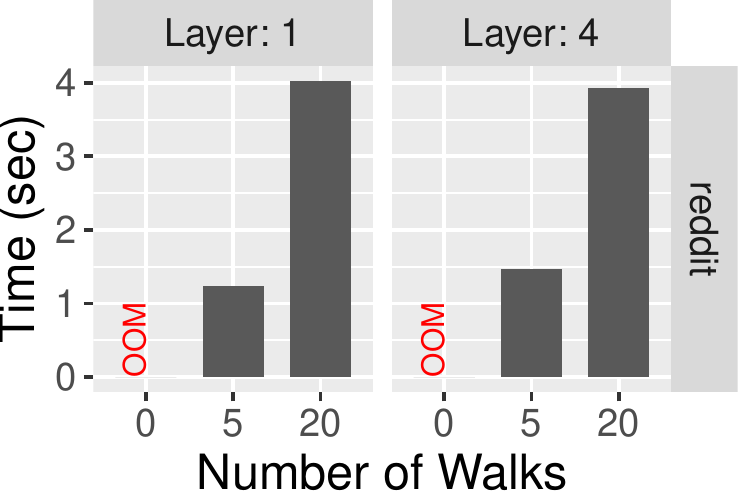}
		\includegraphics[width=0.24\textwidth]{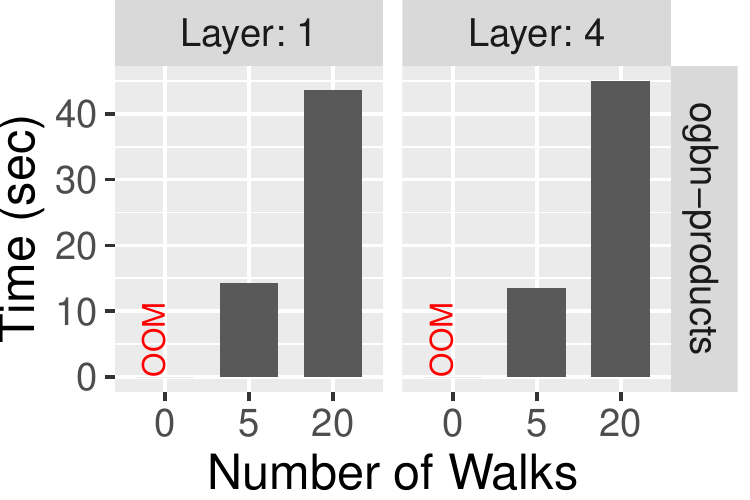}
		\includegraphics[width=0.24\textwidth]{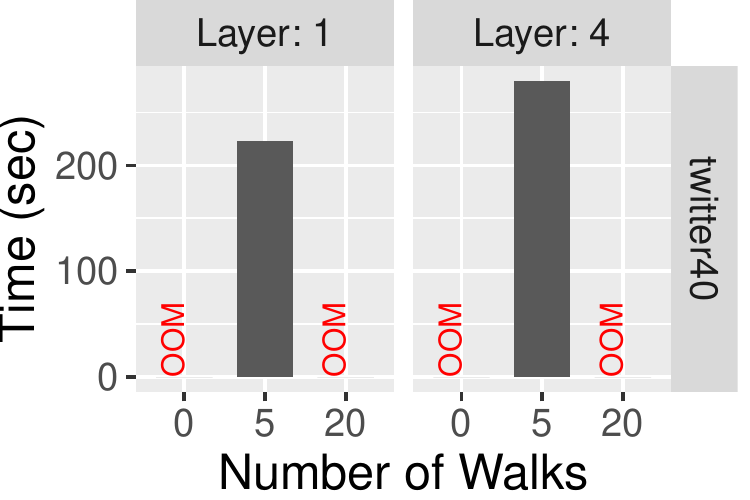}
		\caption{Scaling of W-GTN for large graphs as metapath length (i.e.,
		number of transformer layers) and number of walks are increased. 0 walks
		is G-GTN. Note that the scales are different in each plot.}
		\label{fig:large_scaling}
	\end{center}
\end{figure*}

We were unable to run P-GTN on large graphs because the storage requirement for
creating the metapath graph (which can have more edges than original graph) was
higher than the memory on our machine. G-GTN (i.e., 0 walks) was able
to process only chem2bio.  Therefore, we believe a sampling-based approach like W-GTN is crucial for doing machine learning with metapaths on large heterogeneous graphs.

\cref{fig:large_scaling} shows the average epoch time for 5 epochs on the larger graphs
for varying metapath length sizes and number of random walks. The major takeaway is
that W-GTN continues to scale as both metapath length and the number of random
walks grows. In fact, the increase in runtime as metapath length grows is not high:
the number of walks is low and constant, so an increase in length does not add
significant work to the system. Only increasing the number of walks increases
runtime for W-GTN for these graphs.


\section{Conclusion}
\label{sec:conclusion}

This paper presents (i) a new graph-based formulation of the GTN and (ii) a random
walk sampling approach for sampling metapaths to reduce both the computation and
memory usage of GTNs without compromising the accuracy of the underlying task.
The sampling based approach is up to $155\times$ faster on average than the
original matrix-based GTN implementation without accuracy degradation,
and it allows running GTNs on larger graphs that cannot be run using the PyTorch
implementation.




\section*{Societal Impact}

Heterogeneous knowledge graphs have become the preferred choice for representing
data in biomedical domain because they can assimilate data from many sources and
model the edge semantics needed for machine reasoning.  Metapaths can represent
hidden connections in these graphs, but are buried under the complex knowledge
graph, and hence can be difficult to interpret.  Graph transformer networks
(GTNs) provide a solution to this problem, but the current implementations fail
to work for any real-size data.  These implementations are also very
compute-intensive and requires a lot of energy, thus exacerbating the problem of
global warming and climate change.  This work addresses both these problems by
(1) reducing the training time for GTNs by 155$\times$ and (2) extending
the usability of GTNs to real-world biomedical datasets, thus helping identify
new connections between drug-target pairs, drug-side-effect pairs, drug-disease
pairs, disease-pathway pairs etc.

\bibliographystyle{acm}
\bibliography{bibfiles/refs2,bibfiles/dimitri,bibfiles/bc,bibfiles/blas,bibfiles/gnn,bibfiles/dnn,bibfiles/related,bibfiles/iss,bibfiles/numa,bibfiles/others,bibfiles/graphs,bibfiles/gpugraphs,bibfiles/fpgagraphs,bibfiles/partitioning,bibfiles/outofcore,bibfiles/resilience}

\begin{thebibliography}{10}

\bibitem{agarwal2018}
{\sc Agarwal, U., Ramachandran, V., King, V., and Pontecorvi, M.}
\newblock A deterministic distributed algorithm for exact weighted all-pairs
  shortest paths in {\~o} (n 3/2) rounds.
\newblock In {\em Proceedings of the 2018 ACM Symposium on Principles of
  Distributed Computing\/} (2018), pp.~199--205.

\bibitem{chem2bio2rdf}
{\sc Chen, B., Dong, X., Jiao, D., Wang, H., Zhu, Q., Ding, Y., and Wild,
  D.~J.}
\newblock Chem2bio2rdf: a semantic framework for linking and data mining
  chemogenomic and systems chemical biology data.
\newblock {\em BMC bioinformatics 11}, 1 (2010), 1--13.

\bibitem{FastGCN}
{\sc Chen, J., Ma, T., and Xiao, C.}
\newblock Fast{GCN}: Fast learning with graph convolutional networks via
  importance sampling.
\newblock In {\em International Conference on Learning Representations\/}
  (2018).

\bibitem{cormen2009}
{\sc Cormen, T.~H., Leiserson, C.~E., Rivest, R.~L., and Stein, C.}
\newblock {\em Introduction to algorithms}.
\newblock MIT press, 2009.

\bibitem{metapath2vec}
{\sc Dong, Y., Chawla, N.~V., and Swami, A.}
\newblock {metapath2vec: Scalable representation learning for heterogeneous
  networks}.
\newblock In {\em Proceedings of the 23rd ACM SIGKDD international conference
  on knowledge discovery and data mining\/} (2017), pp.~135--144.

\bibitem{magnn}
{\sc Fu, X., Zhang, J., Meng, Z., and King, I.}
\newblock Magnn: metapath aggregated graph neural network for heterogeneous
  graph embedding.
\newblock In {\em Proceedings of The Web Conference 2020\/} (2020),
  pp.~2331--2341.

\bibitem{chembioedge2vec}
{\sc Gao, Z., Fu, G., Ouyang, C., Tsutsui, S., Liu, X., Yang, J., Gessner, C.,
  Foote, B., Wild, D., Ding, Y., et~al.}
\newblock edge2vec: Representation learning using edge semantics for biomedical
  knowledge discovery.
\newblock {\em BMC bioinformatics 20}, 1 (2019), 1--15.

\bibitem{originalgnn}
{\sc Gori, M., Monfardini, G., and Scarselli, F.}
\newblock A new model for learning in graph domains.
\newblock In {\em Proceedings. 2005 IEEE International Joint Conference on
  Neural Networks, 2005.\/} (2005), vol.~2, pp.~729--734 vol. 2.

\bibitem{GraphSAGE}
{\sc Hamilton, W.~L., Ying, R., and Leskovec, J.}
\newblock Inductive representation learning on large graphs.
\newblock In {\em Proceedings of the 31st International Conference on Neural
  Information Processing Systems\/} (Red Hook, NY, USA, 2017), NIPS '17, Curran
  Associates Inc., pp.~1025--1035.

\bibitem{hetespaceywalk}
{\sc He, Y., Song, Y., Li, J., Ji, C., Peng, J., and Peng, H.}
\newblock Hetespaceywalk: A heterogeneous spacey random walk for heterogeneous
  information network embedding.
\newblock In {\em Proceedings of the 28th ACM International Conference on
  Information and Knowledge Management\/} (2019), pp.~639--648.

\bibitem{drugmeta}
{\sc Himmelstein, D.~S., Lizee, A., Hessler, C., Brueggeman, L., Chen, S.~L.,
  Hadley, D., Green, A., Khankhanian, P., and Baranzini, S.~E.}
\newblock Systematic integration of biomedical knowledge prioritizes drugs for
  repurposing.
\newblock {\em {eLife} 6\/} (2017), e26726.

\bibitem{deepgalois}
{\sc Hoang, L., Chen, X., Lee, H., Dathathri, R., Gill, G., and Pingali, K.}
\newblock Efficient distribution for deep learning on graphs.
\newblock In {\em First MLSys Workshop on Graph Neural Networks and Systems
  (GNNSys21)\/} (2021).

\bibitem{OGB}
{\sc Hu, W., Fey, M., Zitnik, M., Dong, Y., Ren, H., Liu, B., Catasta, M., and
  Leskovec, J.}
\newblock Open graph benchmark: Datasets for machine learning on graphs.
\newblock In {\em Advances in Neural Information Processing Systems\/} (2020),
  H.~Larochelle, M.~Ranzato, R.~Hadsell, M.~F. Balcan, and H.~Lin, Eds.,
  vol.~33, Curran Associates, Inc., pp.~22118--22133.

\bibitem{GCN}
{\sc Kipf, T., and Welling, M.}
\newblock Semi-supervised classification with graph convolutional networks.
\newblock In {\em International Conference on Learning Representations\/}
  (2016), ICLR '16.

\bibitem{twitter40}
{\sc Kwak, H., Lee, C., Park, H., and Moon, S.}
\newblock {What is Twitter, a Social Network or a News Media?}
\newblock In {\em Proceedings of the 19th International Conference on World
  Wide Web\/} (New York, NY, USA, 2010), WWW '10, ACM, pp.~591--600.

\bibitem{PyTorch}
{\sc Paszke, A., Gross, S., Massa, F., Lerer, A., Bradbury, J., Chanan, G.,
  Killeen, T., Lin, Z., Gimelshein, N., Antiga, L., Desmaison, A., Kopf, A.,
  Yang, E., DeVito, Z., Raison, M., Tejani, A., Chilamkurthy, S., Steiner, B.,
  Fang, L., Bai, J., and Chintala, S.}
\newblock Pytorch: An imperative style, high-performance deep learning library.
\newblock In {\em Advances in Neural Information Processing Systems 32},
  H.~Wallach, H.~Larochelle, A.~Beygelzimer, F.~d\textquotesingle
  Alch\'{e}-Buc, E.~Fox, and R.~Garnett, Eds. Curran Associates, Inc., 2019,
  pp.~8026--8037.

\bibitem{pushshift}
{\sc pushshift.io}.
\newblock {pushshift.io}.

\bibitem{GAT}
{\sc Veličković, P., Cucurull, G., Casanova, A., Romero, A., Liò, P., and
  Bengio, Y.}
\newblock Graph attention networks.
\newblock In {\em International Conference on Learning Representations\/}
  (2018), ICLR '18.

\bibitem{HGAN}
{\sc Xiao, W., Houye, J., Chuan, S., Bai, W., Peng, C., P., Y., and Yanfang,
  Y.}
\newblock Heterogeneous graph attention network.
\newblock {\em WWW\/} (2019).

\bibitem{GIN}
{\sc Xu, K., Hu, W., Leskovec, J., and Jegelka, S.}
\newblock How powerful are graph neural networks?
\newblock In {\em International Conference on Learning Representations\/}
  (2019), ICLR '19.

\bibitem{gtn}
{\sc Yun, S., Jeong, M., Kim, R., Kang, J., and Kim, H.~J.}
\newblock Graph transformer networks.
\newblock In {\em Advances in Neural Information Processing Systems\/} (2019),
  H.~Wallach, H.~Larochelle, A.~Beygelzimer, F.~d\textquotesingle
  Alch\'{e}-Buc, E.~Fox, and R.~Garnett, Eds., vol.~32, Curran Associates, Inc.

\bibitem{GraphSAINT}
{\sc {Zeng}, H., {Zhou}, H., {Srivastava}, A., {Kannan}, R., and {Prasanna},
  V.}
\newblock Accurate, efficient and scalable graph embedding.
\newblock In {\em 2019 IEEE International Parallel and Distributed Processing
  Symposium (IPDPS)\/} (May 2019), pp.~462--471.

\end{thebibliography}


%
%
%
%
%

\section{Appendix}


\subsection{Additional Experimental Setup Details}

\paragraph{Changes to PyTorch GTN to Match Computation}

The high level computation changes are described here.  We are including a patch
file that has the exact changes to the source code as well. These changes were
made to make it so that P-GTN was closer to G-/W-GTN in terms of high level
computation.

\begin{enumerate}
	\item Learnable bias in PyTorch's linear layers was removed.
	\item Normalization done over out-edges, not in-edges.
	\item Metapath normalization removed.
	\item Learning rate/weight decay removed from Adam optimizer.
	\item Only 1 channel is used when running P-GTN (G-GTN does not
	have support for multiple channels, and multiple channels would
	mean more overhead in any case).
\end{enumerate}

\paragraph{Hyperparameter Details}

The layer composition of 1 GCN layer followed by 2 linear layers for the GTN was
chosen because the original GTN paper used these parameters.  Similarly, the
hidden feature size of 64 on each layer  was chosen because that is what the
original PyTorch GTN paper used.  All systems use the Adam optimizer with a
learning rate is 0.01, beta parameters 0.9 and 0.999, epsilon $10^{-8}$, and no
weight decay.

\paragraph{Data Split}
The data split used for ACM, DBLP, and IMDB were the same splits used in
the original Graph Transformer Network paper as that is where we obtained the inputs.

\paragraph{Synthetic Feature/Label/Edge Type Generation Details}

In order to run some inputs with the GTN, we generated synthetic values for things
like features, labels, and edge types for the large graphs (chem2bio, reddit,
ogbn-products, twitter40).  Note that we do not check accuracy for these graphs
because it would not make sense due to the generated values; we did this solely
in order to illustrate scaling of our GTN implementation on larger graphs and to
show that GTNs are very expensive to run in terms of memory and compute time
without sampling or some other method of reducing the number of metapaths. The
actual synthetic values that we generated \emph{should not be interpreted to have any
kind of meaning behind them other than filler values in order to make the GTN
run}. We are including this information here for the sake of completeness.

Synthetic features were generated for chem2bio and twitter40. Every vertex's feature
was a vector of length 50 all with the number 2.

Synthetic edge types were generated for all large graphs but chem2bio: for each edge,
we randomly choose 1 of 4 different edge types.

Synthetic labels were generated for chem2bio and twitter40. For each vertex, we choose 1 of
3 labels.

Finally, we also created a train/val/test splits for chem2bio and twitter40: 20\%
were train nodes, 10\% were validation nodes, and 70\% were testing nodes.

\subsection{Proof of $l/2$ Path Split Correctness}

\begin{theorem}
Let  $G$ be a graph and let $MG$ be its metapath graph for some length $l$.
Also, let $MG_1$ and $MG_2$ be the metapath graphs of $G$ for first-half $l/2$ and second-half $l/2$ paths respectively.
$MG(u,v)$ is the weight of the edge $(u, v)$ in the metapath graph $MG$.
Then, $MG(u,v) = \sum\limits_{k\in V} MG_1(u,k) \cdot MG_2(k,v)$ for every pair of vertices $u, v \in V$.
\end{theorem}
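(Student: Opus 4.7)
My plan is to reduce the claim to a bijective counting argument, phrased as a reordering of a sum over paths. The starting point is to make precise the definition of $MG(u,v)$: for a metapath of length $l$ (a fixed sequence of $l$ edge types $t_1,\dots,t_l$, or a sum over all such sequences, depending on the convention used earlier in the paper), $MG(u,v)$ is the sum over all length-$l$ walks $u = v_0 \to v_1 \to \cdots \to v_l = v$ whose $i$-th edge has type $t_i$ of the product of the edge weights along that walk. I would first write this out explicitly as a nested sum over intermediate vertices $v_1,\dots,v_{l-1}$ with an indicator enforcing the type sequence.

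Next I would split the index set of intermediate vertices at position $l/2$: rename $v_{l/2} = k$ and group the inner summations into those over $v_1,\dots,v_{l/2-1}$ (the first-half walk from $u$ to $k$ with type prefix $t_1,\dots,t_{l/2}$) and those over $v_{l/2+1},\dots,v_{l-1}$ (the second-half walk from $k$ to $v$ with type suffix $t_{l/2+1},\dots,t_l$). Because the product of weights along the full walk factors into the product over the first-half edges times the product over the second-half edges, and because the type-sequence indicator factors identically, distributivity lets me pull the sum over $k$ outside and recognize the two inner sums as exactly $MG_1(u,k)$ and $MG_2(k,v)$ by their definitions. Summing over $k \in V$ then yields the claimed identity.

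The main obstacle I anticipate is purely bookkeeping around the metapath type sequence: I need to verify that the definition of $MG_1$ uses precisely the first $l/2$ entries of the type sequence and $MG_2$ uses the last $l/2$, so that concatenating a first-half walk counted by $MG_1$ with a second-half walk counted by $MG_2$ produces a walk that is counted by $MG$ (and every walk counted by $MG$ arises uniquely this way, via its midpoint vertex). Once this bijection between length-$l$ metapath walks and pairs of half-length walks sharing a midpoint is established, the identity follows immediately from distributivity of multiplication over addition; essentially this is the statement that adjacency-matrix multiplication composes path counts, specialized to weighted, type-constrained walks.
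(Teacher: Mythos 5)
Your proposal is correct and uses essentially the same argument as the paper: express $MG(u,v)$ as a sum of path scores, factor each score at the midpoint vertex, regroup by that midpoint, and apply distributivity to recognize the two halves as $MG_1$ and $MG_2$. Your version is somewhat more explicit about the nested sums over intermediate vertices and the type-sequence indicator, but the decomposition and the key step are the same.
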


\begin{proof}
Consider a pair of vertices $u$, $v$ and let $\mathcal{P}_l(u,v)$ represents the set of metapaths between $u$ and
$v$ for length $l$.
Then,

\begin{align*}
MG(u,v) &= \sum\limits_{p\in\mathcal{P}_l(u,v)} score(p) \\
&= \sum\limits_{p\in\mathcal{P}_l(u,v)} score_1(u,x)\cdot score_2(x,v) \\
&\text{\hspace{1pt} (where $x$ is the node at position $l/2$ in $p$)} \\
&= \sum\limits_{x \in V} (\sum\limits_{p_1 \in \mathcal{P}_{\lceil l/2 \rceil}(u,x)} score_1(u,x)) \cdot (\sum\limits_{p_2 \in \mathcal{P}_{\lfloor l/2 \rfloor}(x, v)} score_2(x,v)) \\
&= \sum\limits_{x \in V} MG_1(u,x) \cdot MG_2(x,v)
\end{align*}
\end{proof}

\subsection{Random Walk Algorithm}

\begin{algorithm}[H]
\footnotesize
\caption{Random Walk Algorithm}\label{alg:random-walk}
\textbf{Input:} Graph $G$; vertex $v$; Number of walks $num\_walks$; Walk-length $walk\_length$\\
\textbf{Output:} Set of paths $\mathcal{P}$
\begin{algorithmic}[1]
 \State $\mathcal{P} \leftarrow \phi$
  \For{$i \leftarrow 1$ to $num\_walks$}	\label{alg:for}
  	\State $p \leftarrow \{v\}$
  	\For{$j \leftarrow 1$ to $walk\_length$}	\label{alg:for1}
  		\State $last \leftarrow p[j-1]$
  		\State Randomly sample $u$ from $\mathcal{N} \cup \{last\}$ using acceptance-rejection sampling
  		\State $p \leftarrow p \cdot (last, u)$
  	\EndFor	\label{alg:endFor1}
  	\State $\mathcal{P} \leftarrow \mathcal{P} \cup p$
    \State \textbf{return} $\mathcal{P}$
  \EndFor	\label{alg:endFor}
\end{algorithmic}
\end{algorithm}

We give a brief overview of our random walk implementation.  The input is a
graph $G$ along with a vertex $v$ from which we need to sample $num\_walks$
paths of length $walk\_length$.  Each iteration of the for loop
(Lines~\ref{alg:for}-\ref{alg:endFor}) samples one path, starting from $v$ of
length $walk\_length$.  In the inner for loop in
lines~\ref{alg:for1}-\ref{alg:endFor1}, a vertex is sampled randomly using an
acceptance-rejection sampling technique, which takes into account the weights of
outgoing edges. Edges with higher weights are more likely to get picked, meaning
more important metapaths (at that point in training/inferencing) are more likely
to be sampled.

\end{document}